\let\NAT@parse\undefined
\newcommand\given[1][]{\:#1\vert\:}
\newtheorem{theorem}{Theorem}[section]
\newtheorem{definition}{Definition}[section]
\DeclarePairedDelimiter\abs{\lvert}{\rvert}%
\DeclarePairedDelimiter\norm{\lVert}{\rVert}%
\DeclareMathOperator*{\argmin}{argmin}%
\DeclareMathOperator*{\argmax}{argmax}%
\newcolumntype{C}{>{\Centering\arraybackslash}X}
\newcolumntype{L}{>{\raggedright\arraybackslash}X}
\newcolumntype{R}{>{\raggedleft\arraybackslash}X}
\title{\LARGE \bf
Real-Time Trajectory Planning for Autonomous Driving with Gaussian Process and Incremental Refinement
}
\author{Jie Cheng$^1$, Yingbing Chen$^2$, Qingwen Zhang$^2$, Lu Gan$^2$ and Ming Liu$^{1,2}$
\thanks{$^1$Jie Cheng and Ming Liu are with the Department of Electronic 
        and Computer Engineering, the Hong Kong University of Science and 
        Technology, Clear Water Bay, Kowloon, Hong Kong SAR, China. \texttt{jchengai@connect.ust.hk}, \texttt{eelium@ust.hk}
} 
\thanks{$^2$Yingbing Chen, Qingwen Zhang and Lu Gan are with the System Hub, the HongKong University of Science and Technology (GZ), Guangzhou, China. \{\texttt{ychengz, qzhangcb, luganaa}\}\texttt{@connect.ust.hk}} 
}
\newcommand\mycolor{black}
\begin{document}

\maketitle
\thispagestyle{empty}
\pagestyle{empty}

\begin{abstract}

Real-time kinodynamic trajectory planning in dynamic environments is critical yet challenging for autonomous driving. 
In this paper, we propose an efficient trajectory planning system for autonomous driving in complex dynamic scenarios through iterative and incremental path-speed optimization.
Exploiting the decoupled structure of the planning problem, a path planner based on Gaussian process first generates a continuous arc-length parameterized path in the Fren\'{e}t frame,
considering static obstacle avoidance and curvature constraints. We theoretically prove that it is a good generalization of the well-known jerk optimal solution.
An efficient s-t graph search method is introduced to find a speed profile along the generated path to deal with dynamic environments.
Finally, the path and speed are optimized incrementally and iteratively to ensure kinodynamic feasibility. 
Various simulated scenarios with both static obstacles and dynamic agents verify the effectiveness and robustness of our proposed method.
Experimental results show that our method can run at 20 Hz.
The source code is released as an open-source package.

\end{abstract}


\section{Introduction}

\subsection{Motivation}

{\color{\mycolor}The trajectory planning system is crucial for autonomous driving tasks \cite{katrakazas2015review}}.
It needs to be robust and efficient, as well as able to handle static obstacles, dynamic traffic participants and various constraints (\textit{e.g.,} speed limits, kinodynamic feasibility) in real-time, which is challenging. 

Existing works can be categorized into two main classes: \textit{spatio-temporal planning} \cite{ajanovic2018_st_search, brito2019mpc, ding2019ssc_planner, he2021tdr} and \textit{path-speed decoupled planning} \cite{gu2015tunable, fan2018baidu, zhou2021dl_iaps}.
{\color{\mycolor}Spatio-temporal (s-t)} approaches mostly adopt a hierarchy scheme.
They first find an initial solution in s-t space by searching or sampling, and some may further construct convex subspaces based on the rough solution.
Then the preliminary result is improved by solving a receding-horizon optimization problem or fitting it with a parameterized curve.
The main advantage of these approaches is that they can consider spatial and temporal constraints simultaneously.
However, finding an initial guess in the spatio-temporal space is non-trivial, sometimes even harder than solving the optimization problem.
Approximations (\textit{e.g.,} increasing grid size or sample resolution, heuristics) have to be used to make this problem tractable in limited time.
As a result, these approaches are prone to local optimum.
Path-speed decoupled approaches break the high dimensional problem into two easier subproblems: first, planing a path; then, generating a speed profile along that path.
The benefits are two folds. 
Firstly, they do not require a preliminary result in the s-t domain.
Secondly, they usually enjoy higher efficiency and flexibility.
{\color{\mycolor}Nevertheless, they are generally much harder to guarantee kinodynamic feasibility due to the decomposition structure.}
\begin{figure}
\centering
\includegraphics[width=3.0in]{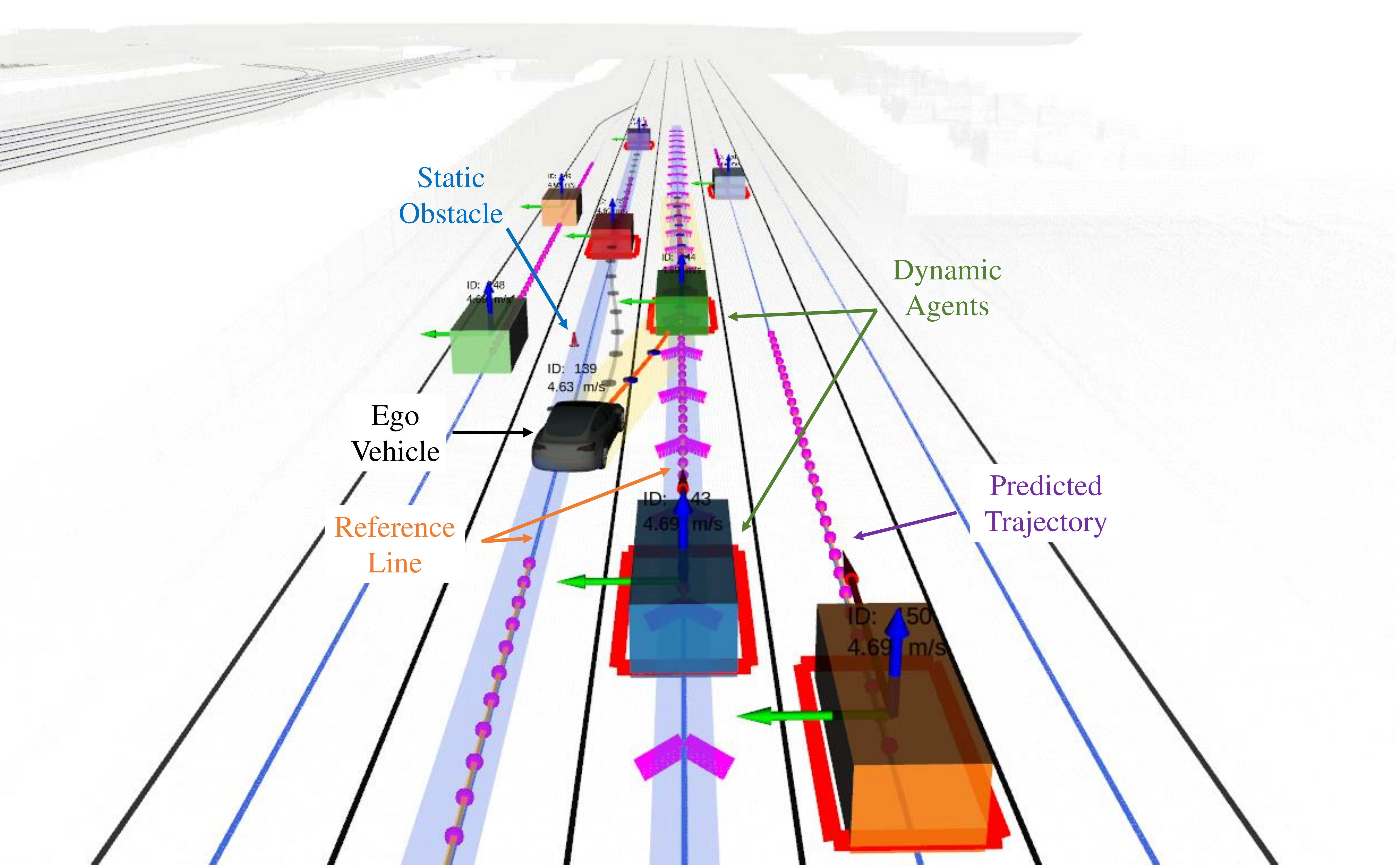}
\caption{Illustration of trajectory planning with both dynamical agents and a static obstacle.
The AV can either avoid the obstacle (gray trajectory) or merge into the right lane (red trajectory).
More examples can be found at \href{https://youtu.be/NHEZDrAzghI}{\color{\mycolor}{https://youtu.be/NHEZDrAzghI}.}}
\label{fig:intro}
\end{figure}

Apart from the issues mentioned above, most existing works have difficulty handling the curvature constraint.
Some ignore the constraint by assuming the resulting trajectory is smooth, which is not reasonable in some scenarios (\textit{e.g.,} sharp turns).
Others have to approximate or relax the constraint using a heuristic limit, leading to an overly conservative or even infeasible solution. 

In this paper, we propose an efficient trajectory planning framework to overcome the problems of existing methods.
Our approach can be categorized as path-speed decoupled planning in general.
Firstly, this path planning problem is converted to a probabilistic inference problem with Gaussian process (GP). 
{\color{\mycolor}This is motivated by recent progress on motion planning via probabilistic inference approach \cite{mukadam2016gaussian,mukadam2018continuous}, which shows significant increase in speed over other trajectory optimization solvers.}

Analytic formulation of curvature constraint is used and integrated into the path planning process. 
We also theoretically prove that this path generation method is a good generalization of the well-known jerk optimal solution \cite{werling2012optimal}.
To avoid local optimum, a novel and efficient s-t graph search method is introduced.
Finally, an incremental path-speed adjustment method is adopted to ensure kinodynamic feasibility.

\subsection{Contribution}
Compared with existing methods, our proposed approach is able to directly consider the curvature constraint without approximations, and efficiently generate high-quality kinodynamically feasible trajectories. 
The main contributions are summarized as follows:
\begin{itemize}
        \item A path planner incorporating GP that generates a collision-free path under strict curvature constraints.
        \item A complete and efficient trajectory planning framework for complex dynamic environments with a safety and kinodynamic feasibility guarantee.
        \item Comprehensive validation with various numerical experiments and open-source real-time implementation for the reference of the community.
\end{itemize}
\section{Related Work\label{related_works}}

\subsection{Iterative Path-speed Refinement}

The online iterative path-speed refinement technique is widely adopted in the unmanned aerial vehicle (UAV) community. 
For example, Zhou \textit{et al.} \cite{zhou2019robust} iteratively adjust the infeasible velocity and acceleration control points of the B-spline trajectory to achieve dynamic feasibility for a quadcopter,
{\color{\mycolor}while \cite{wang2020alternating} and \cite{sun2021fast} use a similar bilevel optimization approach to get spatio-temporal optimal trajectories in real-time.}
However, this technique is less commonly used for AVs. 
The main reason is that various non-linearities (\textit{e.g.}, collision constraints on a rectangular vehicle body, nonlinear vehicle dynamics) make the optimization problem for AVs expensive to solve iteratively. 
Adopting this approach for AVs,
Kapania \textit{et al.} \cite{kapania2016sequential} use a two-step algorithm to generate the trajectory for car racing with near real-time performance. 
{\color{\mycolor}Similarly, Tang \textit{et al.} \cite{tang2019time} apply bilevel optimization for time optimal car racing.} 
However, there are only applicable for static, obstacle-free racetracks.

{\color{\mycolor}Xu \textit{et al.} \cite{xu2021autonomous} use recurrent spline optimization to ensure curvature constraints, but do not consider dynamic feasibility when speed changes.} 
Recent work \cite{zhou2021dl_iaps} takes both dynamic feasibility and obstacle avoidance into consideration. 
However, the performance is not satisfactory (around 5Hz with the presence of obstacles).

{\color{\mycolor}By formulating the path planning as a probabilistic inference problem, we are able to partially improve the invalid path with efficient incremental inference tool \cite{kaess2012isam2} from the SLAM community, instead of generating a new path from scratch.}

\subsection{Motion Planning via Probabilistic Inference}
Recent progress on \textit{motion planning via probabilistic inference} has opened a new window onto the motion planning problem.
{\color{\mycolor}Mukadam \textit{et al.} \cite{mukadam2016gaussian} propose Gaussian process motion planning (GPMP), where they show GP driven by linear, time-varying stochastic differential equations (LTV-SDEs) is the appropriate tool connecting motion planning and probabilistic inference.}

The follow-up GPMP2 \cite{mukadam2018continuous} uses factor graphs to model the planning problem, and supports fast replanning by making use of an incremental Bayes tree solver \cite{kaess2012isam2}. 

{\color{\mycolor}Following them, we use Gaussian process for path planning}, 
and we theoretically investigate the connection between this planning method and the well-known jerk optimal solution \cite{werling2012optimal}.
\section{System Overview}

\begin{figure}
\vspace{0.5em}
\centering
\includegraphics[width=3.4in]{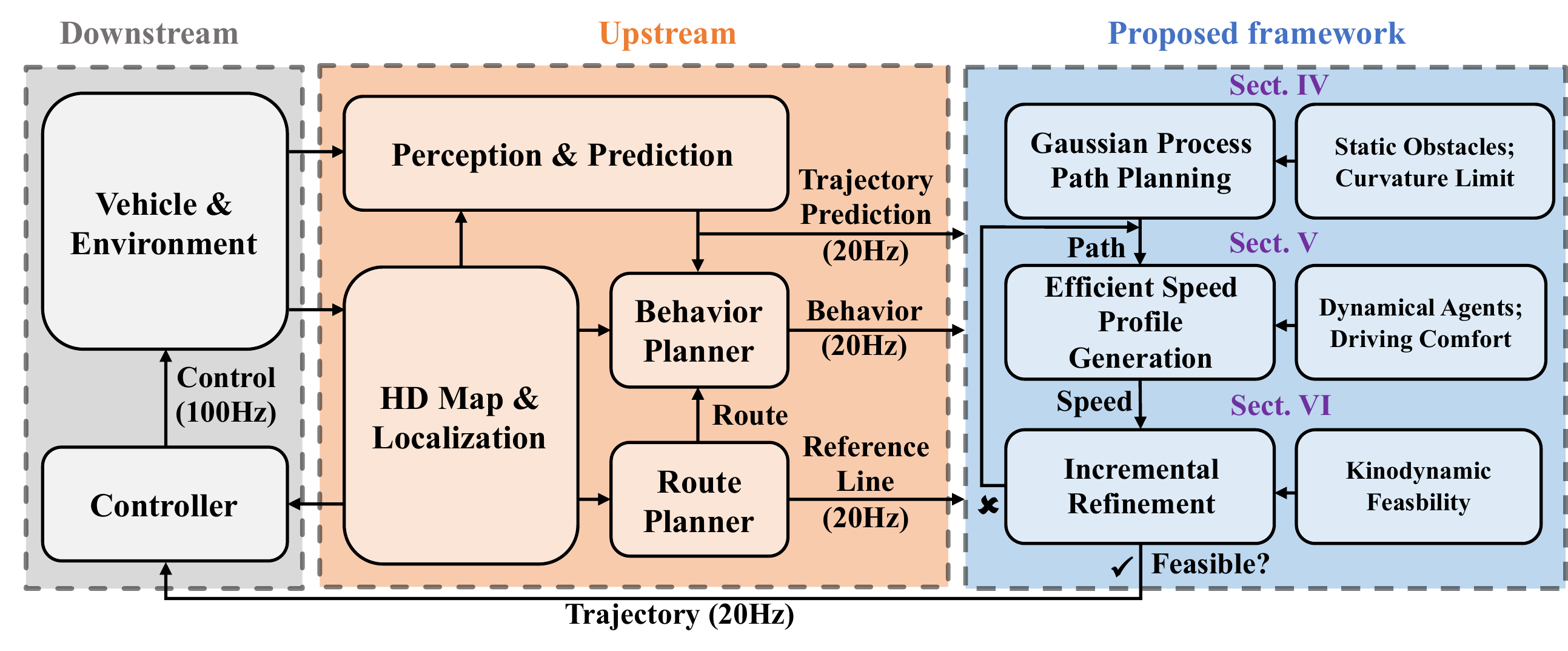}
\vspace{-1.5em}
\caption{Overview of the proposed trajectory planning framework and its relationship with other system components.}
\label{fig:overview}
\end{figure}

Fig. \ref{fig:intro} shows a typical scenario this paper tries to deal with.
Fig. \ref{fig:overview} outlines the proposed framework and its relationship with other system components.
It takes the trajectory prediction results, high-level behavior command and lane information as input, then generates collision free, executable trajectories.

The behavior means semantic level commands, such as ``lane change''.

The trajectory planning process contains three major steps as depicted in Fig. \ref{fig:overview} (blue part). 
First, A GP-based path planner generates a path in the Fren\'et frame.
Second, the speed planning module finds an optimal speed-profile along that path with the efficient s-t graph search.
Finally, when the result trajectory fails the feasibility check, we adjust the current path with the incremental refinement module. 
These three steps will be explained in Sect. \ref{sect:gaussian_process_path_planning}, \ref{sect:speed_planning} and \ref{sec:refinement} respectively.
\section{Gaussian Process Path Planning\label{sect:gaussian_process_path_planning}}

\subsection{GP Prior of the Fren\'et Path}
We consider a path in the Fren\'et frame $d(s)$, where $d$ is the lateral offset and $s$ is the arc-length along the reference line.
We define a prior distribution $\bm{d}(s) \sim \mathcal{GP}\left( \bm{\mu}(s), \bm{\mathcal{K}}(s,\tilde{s}) \right)$,
where $\bm{d}(s) = [d(s), d'(s), d''(s)]^\intercal$ is the lateral state, $\left(\cdot\right)'=\mathrm{d}(\cdot)/\mathrm{d}s$ denotes the derivative w.r.t. arc-length,
$\bm{\mu}(s)$ is the vector-valued mean function, and $\bm{\mathcal{K}}(s, \tilde{s})$ is a matrix-valued covariance function.
Given a set of longitudinal locations $\bm{s} = [s_0,\dots,s_N]^\intercal$, $\bm{d}$ has a joint Gaussian distribution:
\begin{align}
&\bm{d} \doteq \begin{bmatrix} \bm{d}_0 & \dots & \bm{d}_N \end{bmatrix}^\intercal \sim \mathcal{N} (\bm{\mu}, \bm{\mathcal{K}}), \label{eq:gp_prior}
\end{align}
where $\bm{\mu}$ is the mean vector, and $\bm{\mathcal{K}}$ is the covariance matrix , which are defined as
\begin{align}
&\bm{\mu} \doteq \begin{bmatrix} \bm{\mu}(s_0) & \dots & \bm{\mu}(s_N) \end{bmatrix}^\intercal,\\
&\bm{\mathcal{K}} \doteq \left[\bm{\mathcal{K}}(s_i, s_j)\right]_{ij,0\leq i,j \leq N}.
\end{align}
Following \cite{mukadam2016gaussian} and \cite{mukadam2018continuous}, $\bm{\mu}(s)$ and $\bm{\mathcal{K}}(s,\tilde{s})$ are generated by LTV-SDEs (arc-length as the parameter here):
\begin{align}
\bm{d}'(s) &= \mathbf{A}(s)\bm{d}(s) + \bm{u}(s) + \mathbf{F}(s)w(s),\label{ltv-sde} \\
w(s) &\sim \mathcal{GP}(0, \mathbf{Q}_C \delta(s-\tilde{s})),
\end{align}
where $\mathbf{A}(s)$ and $\mathbf{F}(s)$ are the arc-length varying matrices for the system, $\bm{u}(s)$ is the system input. $w(s)$ is the white process noise, $\mathbf{Q}_C$ is the power spectral matrix.
The mean and covariance function are generated by the solution of (\ref{ltv-sde}):
\begin{align}
\bm{\mu}(s) &= \mathbf{\Phi}(s,s_0)\bm{\mu_0} + \int_{s_0}^s\mathbf{\Phi}(s, \tau)\bm{u}(\tau)\mathrm{~d}\tau,\\
\begin{split}
\bm{\mathcal{K}}(s, \tilde{s}) &=
\mathbf{\Phi}(s,s_0)\bm{\mathcal{K}}_0\mathbf{\Phi}(\tilde{s},s_0)^\intercal \\
& \mkern-30mu + \int_{s_0}^{\text{min}(s,\tilde{s})}\mathbf{\Phi}(s,\tau)\mathbf{F}(s)\mathbf{Q}_C\mathbf{F}(s)^\intercal \mathbf{\Phi}(\tilde{s}, \tau)^\intercal \mathrm{~d}\tau,
\end{split}
\end{align}
where $\bm{\mu}_0$ and $\bm{\mathcal{K}}_0$ are the initial mean and covariance of $\bm{d}_0$, and $\mathbf{\Phi}(s, \tau)$ is the state transition matrix.
The prior distribution of the lateral states represented in terms of $\bm{\mu}$ and $\bm{\mathcal{K}}$ is
\begin{equation}
        P(\bm{d}) \propto \exp\left\{ - \frac{1}{2} \norm{\bm{d} - \bm{\mu}}_{\bm{\mathcal{K}}}^2 \right\}.
\end{equation}

This particular group of GPs generated by LTV-SDE has a special and helpful property: $\bm{d}(\tilde{s}),\, 0 \leq s_i < \tilde{s} < s_{i+1} \leq N$ only depends on its adjacent values $\bm{d}_i$ and $\bm{d}_{i+1}$.
$\bm{d}(\tilde{s})$ can be efficiently computed by
\begin{align}
\bm{d}(\tilde{s}) &= \bm{\mu}(\tilde{s}) + \mathbf{\Lambda}(\tilde{s})\left( \bm{d}_i - \bm{\mu}_i \right)\label{eq:interpolate}
 + \bm{\Psi}(\tilde{s})\left( \bm{d}_{i + 1} - \bm{\mu}_{i + 1} \right)\\ 
\bm{\Psi}(\tilde{s}) &= \mathbf{Q}_{\tilde{s}} \mathbf{\Phi}(s_{i + 1}, \tilde{s})^\intercal \mathbf{Q}_{i + 1}^{ - 1},\\
\mathbf{\Lambda}(\tilde{s}) &= \mathbf{\Phi}(\tilde{s}, s_i) - \bm{\Psi}(\tilde{s}) \mathbf{\Phi}(s_{i + 1}, s_i),\\
\mathbf{Q}_i &= \int_{s_{i - 1}}^{s_i} \mathbf{\Phi}(s_i, \tau) \mathbf{F}(\tau) \mathbf{Q}_C \mathbf{F}(\tau)^\intercal \mathbf{\Phi}(s_i, \tau)^\intercal \mathrm{~d}{\tau}. \label{eq:Q_i}
\end{align}

In the rest of this paper, we will use the white noise on jerk model (WNOJM) for (\ref{ltv-sde}), whose $\mathbf{\Phi}(s,\tau)$ and $\mathbf{Q}_i$ have a relatively simple form.
See Appendix \ref{wnoj} for details.

\subsection{Path Planning via Probabilistic Inference\label{sect:full_posterori}}
To illustrate how to convert the path planning into a probabilistic inference problem, we consider a simple case first.
Assume we have a linear “observer” $\bm{o}$ that observes the lateral states:
\begin{equation}
\bm{o}_i = \bm{d}_i  + \bm{n}_i, \quad \bm{n_i} \sim \mathcal{N}\left( \bm{0}, \mathbf{R}_i \right),
\end{equation}
where $\mathbf{R}_i$ is a positive definite diagonal matrix.
Together with (\ref{eq:gp_prior}), we have the joint Gaussian distribution:
\begin{equation}
p\left( \begin{bmatrix} \bm{d} \\ \bm{o} \end{bmatrix} \right) 
= \mathcal{N} 
\left( 
\begin{bmatrix} \bm{\mu} \\ \mathbf{M}\bm{\mu} \end{bmatrix},
\begin{bmatrix} 
\bm{\mathcal{K}} & \bm{\mathcal{K}} \mathbf{M}^\top \\
\mathbf{M}\bm{\mathcal{K}}^\top & \mathbf{M}\bm{\mathcal{K}}\mathbf{M}^\top + \mathbf{R}
\end{bmatrix}\right),
\end{equation}
where $\mathbf{M}$ is a matrix that makes the matrix size meet; $\bm{o}$ is a vector that contains $m$ observations,
\begin{equation}
\bm{o} = \begin{bmatrix} \bm{o}_0 & \dots & \bm{o}_m \end{bmatrix}^\intercal,\quad \mathbf{R} = \text{diag}\left( \mathbf{R}_1, \dots, \mathbf{R}_m \right);
\end{equation}
and the posterior is
\begin{equation}
\begin{aligned}
p(\bm{d} \given{\bm{o}}) &= \mathcal{N} \Big(
\left( \bm{\mathcal{K}}^{ - 1} + \mathbf{M}^\top \mathbf{R}^{ - 1} \mathbf{M} \right)^{ - 1} \left( \bm{\mathcal{K}^{ - 1}} \bm{\mu}\, \right. \\
&+ \left. \mathbf{M}^\top \mathbf{R}^{ - 1} \bm{o} \right) , \left( \bm{\mathcal{K}}^{ - 1} + \mathbf{M}^\top \mathbf{R}^{ - 1} \mathbf{M}  \right)^{ - 1} \Big).
\end{aligned} \label{eq:posterior}
\end{equation}
Suppose we want to plan a path from the initial state $\left( s_0, \bm{\bar{d}}_0 \right)$ to end state $\left( s_N, \bm{\bar{d}}_N \right)$.
Let $\bm{o} = [\bm{\bar{d}}_0, \bm{\bar{d}}_N]^\intercal$, $\mathbf{R}_0$ and $\mathbf{R}_N$ be a sufficiently small value.
The mean of the posterior (\ref{eq:posterior}) is the desired lateral state, and middle points are interpolated with (\ref{eq:interpolate}).
Next, we will prove that this path is actually the jerk optimal path given that $\bm{s} = \{s_0, s_N\}$.
\begin{definition}
A jerk optimal path $d^*(s)$ is a connection between the start state $\bm{\bar{d}}_0$ and an end state $\bm{\bar{d}}_N$ within the interval $S = s_N - s_0$ that minimizes the cost function:
\begin{equation*}
J = \int_{s_0}^{s_N} \big(d'''(s) \big)^2 \mathrm{~d}s.
\end{equation*}
\end{definition} 
\begin{theorem}\label{theo:jerk_optimal}
Let $\bm{s}=\{s_0, s_N\}$ and $\bm{o} = [\bm{\bar{d}}_0, \bm{\bar{d}}_N]^\intercal$.
Then the result path given by (\ref{eq:interpolate}) and (\ref{eq:posterior}) is the jerk optimal path connecting $\bm{\bar{d}}_0$ and $\bm{\bar{d}}_N$, provided that $\mathbf{R} \to \bm{0}$.
\end{theorem}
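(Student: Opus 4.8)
The plan is to characterize the posterior mean as the maximum a posteriori (MAP) trajectory and to show that, in the limit $\mathbf{R}\to\bm{0}$, this MAP problem reduces exactly to the constrained jerk minimization of the Definition. Since the posterior (\ref{eq:posterior}) is Gaussian, its mean coincides with the minimizer of the negative log-density, which up to an additive constant splits as $\tfrac12\norm{\bm{d}-\bm{\mu}}_{\bm{\mathcal{K}}}^2 + \tfrac12\norm{\mathbf{M}\bm{d}-\bm{o}}_{\mathbf{R}}^2$. First I would observe that as $\mathbf{R}\to\bm{0}$ the data term acts as an exact penalty: its weight $\mathbf{R}^{-1}$ diverges, so the minimizer is forced onto the feasible set $\mathbf{M}\bm{d}=\bm{o}$, i.e. $\bm{d}_0=\bm{\bar d}_0$ and $\bm{d}_N=\bm{\bar d}_N$, while among all such $\bm{d}$ it minimizes the prior term $\norm{\bm{d}-\bm{\mu}}_{\bm{\mathcal{K}}}^2$. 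Hence it suffices to show that this constrained minimizer equals the jerk optimal path $d^*$.

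The crux is to connect the finite-dimensional prior quadratic form to the continuous jerk cost $J$. Here I would exploit the WNOJM structure of (\ref{ltv-sde}): with $\mathbf{A}$ and $\mathbf{F}$ chosen so that the driving white noise is precisely the third derivative, $w(s)=d'''(s)$, the standard LTV-SDE identity (Appendix \ref{wnoj}) gives that the prior Mahalanobis norm evaluated along the GP interpolant equals $\mathbf{Q}_C^{-1}\int_{s_0}^{s_N}\big(d'''(s)\big)^2\,\mathrm{d}s$ plus a term fixed by the boundary state, where $\mathbf{Q}_C$ is the scalar power spectral density. In other words, minimizing $\norm{\bm{d}-\bm{\mu}}_{\bm{\mathcal{K}}}^2$ subject to the endpoint constraints is, up to the positive scaling $\mathbf{Q}_C^{-1}$, exactly the problem defining $d^*$. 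Therefore the constrained posterior mean minimizes $J$ over paths with the prescribed boundary states, which is the defining property of the jerk optimal path.

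To make the identity rigorous and to pin down uniqueness, I would verify the interpolation side explicitly. Substituting the closed forms of $\mathbf{\Phi}(s,\tau)$ and $\mathbf{Q}_i$ from Appendix \ref{wnoj} into (\ref{eq:interpolate}) with $\bm{s}=\{s_0,s_N\}$, the interpolated lateral offset $d(\tilde{s})$ emerges as a degree-five polynomial in $\tilde{s}$ that satisfies all six boundary conditions (offset, slope, curvature at each end). On the other side, the Euler--Lagrange equation of $J$ is $d^{(6)}(s)=0$, so $d^*$ is likewise the unique quintic meeting the same six conditions; matching coefficients (equivalently, invoking this uniqueness) closes the argument and recovers Werling's quintic solution. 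The main obstacle I anticipate is the continuous-to-discrete bookkeeping of the second paragraph, namely justifying that the discrete covariance $\bm{\mathcal{K}}$ assembled from $\mathbf{Q}_i$ reproduces the continuous jerk functional on the interpolant, handling the non-square $\mathbf{F}$ through its pseudo-inverse together with the $\mathbf{R}\to\bm{0}$ limit; the explicit quintic computation then serves as an independent check that sidesteps the measure-theoretic subtleties.
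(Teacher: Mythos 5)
Your proposal is correct overall, and its load-bearing parts coincide with the paper's proof: you first establish $\bm{d}^*\to\bm{o}$ as $\mathbf{R}\to\bm{0}$ (you do this by an exact-penalty argument on the negative log-posterior; the paper rearranges the normal equations into $(\mathbf{R}\bm{\mathcal{K}}^{-1}+\mathbf{I})\bm{d}^*=\mathbf{R}\bm{\mathcal{K}}^{-1}\bm{\mu}+\bm{o}$ and lets $\mathbf{R}\to\bm{0}$ --- both are fine), and you finish by substituting (\ref{eq:phi}) and (\ref{eq:Q}) into (\ref{eq:interpolate}) to show it collapses to quintic Hermite interpolation, after which Werling's result (equivalently your Euler--Lagrange observation $d^{(6)}=0$ plus the six boundary conditions) gives jerk optimality. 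Where you genuinely diverge is the middle paragraph, and that is the step I would not let stand as written: with $\bm{s}=\{s_0,s_N\}$ the discrete vector $\bm{d}=[\bm{d}_0,\bm{d}_N]^\intercal$ is \emph{completely} pinned down by the endpoint constraint once $\mathbf{R}\to\bm{0}$, so your ``constrained minimization of $\norm{\bm{d}-\bm{\mu}}^2_{\bm{\mathcal{K}}}$ over the feasible set'' is a minimization over a single point and does no work; the finite-dimensional prior term cannot be the source of jerk optimality here. The optimality must come from the fact that the \emph{interpolant} (\ref{eq:interpolate}) --- the conditional mean of the continuous-time WNOJ process given the two knots --- is itself the minimizer of $\int_{s_0}^{s_N}(d''')^2\,\mathrm{d}s$ subject to the six boundary values, which is exactly what your explicit quintic computation (and the paper's) establishes. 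So the first and third paragraphs together reproduce the paper's argument and suffice; the second paragraph should either be dropped or restated as a statement about the continuous-time conditional mean rather than about the discrete Mahalanobis norm, since as phrased it is vacuous in the two-knot setting the theorem actually concerns.
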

\begin{proof}
See Appendix \ref{proof}.
\end{proof}
Theorem \ref{theo:jerk_optimal} implies that the general path generated by (\ref{eq:posterior}) is actually piecewise quintic polynomials, and thus this approach is a good generalization of the known jerk optimal solution \cite{werling2012optimal}.

\subsection{Differentiable Path Constraints}

\begin{figure}
\centering
\vspace{0.5em}
\includegraphics[width=2.5in]{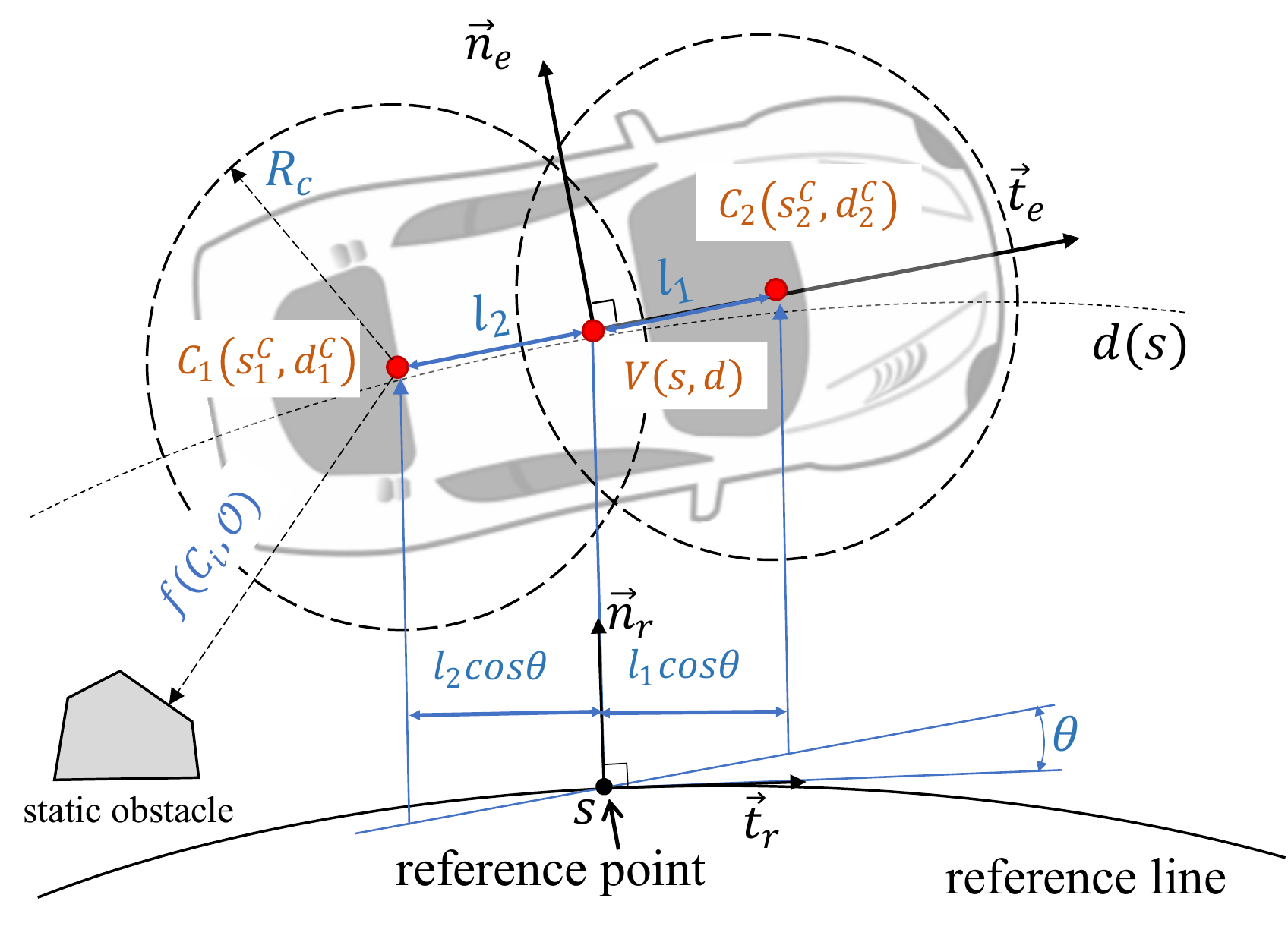}
\caption{An example of collision constraint with two bounding circles. $l_i$ is the distance from $V$ to $C_i$, $R_c$ is the radius of the circles and $\theta$ is the angle between $\vec{\bm{t}}_e$ and $\vec{\bm{t}}_r$.}
\label{fig:vehicle}
\end{figure}

Two major constraints need to be considered in path planning: collision avoidance and curvature constraints.

1) \textit{Collision Constraint}. As shown in Fig. \ref{fig:vehicle}, $V(s,d)$ is a path point of the desired path $d(s)$, and it normally represents the center of the rear axle or the mass of the AV.
To enforce the differentiable collision constraint, we cover the AV with several bounding circles $C_1, \dots, C_q$. 
We approximated the Fren\'et coordinates $(s^{C}_{i}, d^{C}_{i})$ of $C_i$ with
\begin{equation}
s^{C}_{i} = s + \xi l_i \cos \theta, \quad d^{C}_{i} = d + \xi l_i \sin \theta,
\end{equation}
where $\xi = \text{sgn}(\langle \overrightarrow{VC_i}, \overrightarrow{\bm{t}_e} \rangle)$ and $\theta$ is obtained by solving
\begin{equation}
        d' = [1 - \kappa_r d]\tan\theta,
\end{equation}
where $\kappa_r$ is the curvature of the reference point. 
Then the representation of collision constraint is
\begin{equation}
\min_{0\leq i \leq q} f \left( C_i, \mathcal{O}  \right) \geq R_c + \epsilon_s,
\end{equation}
where $\mathcal{O} = \{ \mathcal{O}_1, \dots, \mathcal{O}_k\}$ are static obstacles or road boundaries, $\epsilon_s$ is the safety margin, and $f$ is a function that returns the nearest distance to $\mathcal{O}$. We exploit the Euclidean Signed Distance Field (ESDF) for the distance query.
As depicted by Fig. \ref{fig:frenet}, the lane is discretized by a longitudinal resolution $s_\Delta$ and a lateral resolution $d_\Delta$. Static obstacles are projected w.r.t. the reference line.

\begin{figure}
\centering
\includegraphics[width=2.8in]{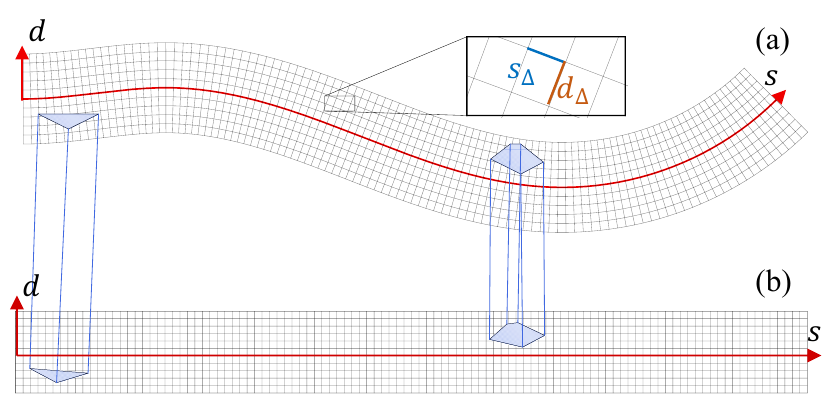}
\caption{Illustration of discretization of the lane w.r.t. the reference line (red line in the figure) and projection of the static obstacles.}
\label{fig:frenet}
\end{figure}

2) \textit{Curvature Constraint}. Constraining the maximum curvature of the path is crucial for large turning maneuver such as collision avoidance and sharp turning.
It would be risky if the planned path cannot be followed by the vehicle. 
However, this constraint is seldom addressed in existing literatures. 
Firstly, it is difficult to obtain explicit representation of the curvature using time-parametrized continuous curve \cite{ding2019ssc_planner}. 
Besides, when describe the path with a set of discrete points \cite{zhou2021dl_iaps}, one can only approximate the curvature using finite difference, which requires fine discretization.

The main advantage of the proposed continuous arc-length parametrized path is that we have the closed-form representation of the curvature of any path point in the path, which is a function of the lateral states.
From \cite{werling2012optimal}, we have
\begin{equation}
d'' = - [\kappa_r d]'\tan\theta + \frac{1 -\kappa_r d}{\cos^2\theta}\left[\kappa_p \frac{1 - \kappa_r d}{\cos\theta} - \kappa_r\right],
\end{equation}
where $\kappa_r$ and $\kappa_r'$ are the curvature and derivative of the curvature of the reference point. By rearranging terms, we have the explicit formulation of the curvature:
\begin{equation}
\kappa_p = \left[d'' - (\kappa_r' d + \kappa_r d')\tan \theta\right] \frac{\cos^3 \theta}{(1- \kappa_r d)^2} + \frac{\kappa_r\cos\theta}{1 - \kappa_r d}.
\end{equation}
Finally, the curvature constraint is represented as 
\begin{equation}
\abs{\kappa_p} \leq \kappa_{\max }
\label{eq:kappa_limit}
\end{equation}

\subsection{Maximum a Posteriori Solution}
Path constraints are modeled as the likelihood function $L\big(\bm{d}_i \given \mathcal{O} \big)$, which specifies the probability of being collision free $P(\min_{0\leq j \leq q} f(C_j, \mathcal{O}) \geq R_c +\epsilon_s \given \bm{d}_i)$ or within the curvature limit $P(\abs{\kappa_i} \leq \kappa_{\max} \given \bm{d}_i)$, given the current lateral state $\bm{d}_i$.
We use a distribution of the exponential family to represent the likelihood:
\begin{equation}
L\big(\bm{d}_i \given \mathcal{O} \big) \propto \exp \left\{ - \frac{1}{2} \norm{\bm{p}(\bm{d}_i)}^2_{\bm{\Sigma}} \right\},
\end{equation}
where  $\bm{p}(\bm{d}_i)$ is a vector-valued cost function and $\bm{\Sigma} = \sigma \bm{I}$ is a hyperparameter. For the likelihood function of the collision constraint $L_{\text{col}}$, $\bm{p}_{\text{col}}$ is defined as
\begin{equation}
\bm{p}_{\text{col}} = \left[j_{\text{col}}\big(f(C_i, \mathcal{O}), R_c + \epsilon_s \big)\right]\Big|_{1\leq i \leq q},
\end{equation}
where $j_{\text{col}}:R\mapsto R$ is a second-order continuous penalty function (the continuity can be checked by directly taking derivatives) given by
\begin{equation}
j_{\text{col}}(x, x_r)= 
\begin{cases}
0, & (\bar{x} \leq 0)\\
\alpha \bar{x}^3, & (0 < \bar{x} \leq x_r)\\
\alpha (3 x_r \bar{x}^2 - 3 x_r^2 \bar{x} + x_r^3), & ( \bar{x} > x_r)
\end{cases},
\end{equation}
where $\bar{x} = x_r - x$, $x_r$ is a parameter that specifies the boundary, and $\alpha$ is a normalization factor.
For the likelihood function of curvature constraint $L_{\text{cur}}$, $\bm{p}_{\text{cur}}$ is defined similarly:
\begin{equation}
\bm{p}_{\text{cur}} = \left[j_{\text{cur}}\big(\kappa_p(\bm{d_i}), \kappa_{\max}\big)\right],
\label{eq:curvature_constraint}
\end{equation}
where $j_{\text{cur}}$ is basically the same as $j_{\text{col}}$, except that it is two-side bounded since the curvature is allowed to be in the range $\left[ -\kappa_{\max}, \kappa_{\max} \right]$.

With path constraints, it is unlikely to obtain a full posterior as Sect. \ref{sect:full_posterori} does. 
Fortunately, for path planning, we only care about the maximum a posteriori (MAP) solution:
\begin{equation}
\begin{aligned}
\bm{d}^* 
&= \underset{\bm{d}}{\argmax} \bigg\{ P(\bm{d})\prod_i \prod_{j = \text{col,cur}} L_j(\bm{d}_i \given \mathcal{O})\bigg\}\\ 
&=\underset{\bm{d}}{\argmin} \bigg\{ - \log \Big(  P(\bm{d})\prod_i \prod_{j = \text{col,cur}} L_j(\bm{d}_i \given \mathcal{O}) \Big) \bigg\}\\
&=\underset{\bm{d}}{\argmin} \bigg\{ \frac{1}{2} \norm{\bm{d} - \bm{\mu}}^2_{\bm{\mathcal{K}}} + \frac{1}{2} \norm{\bm{p}_{\text{col}}(\bm{d})}^2_{\bm{\Sigma}_{\text{obs}}} \\
&\qquad \qquad \quad+ \frac{1}{2}\norm{\bm{p}_{\text{cur}}(\bm{d})}^2_{\bm{\Sigma}_{\text{cur}}} \bigg\},
\label{eq:map}
\end{aligned}
\end{equation}
where $\bm{p}_{(\cdot)}(\bm{d}) = \left[ \bm{p}_{(\cdot)}(\bm{d}_0),\dots, \bm{p}_{(\cdot)}(\bm{d}_N) \right]^\intercal$.

\section{Efficient Speed-Profile Generation\label{sect:speed_planning}}

\begin{figure*}[!t]
\centering
\subfloat[Expansion and local trauncation]{\includegraphics[width=2.15in]{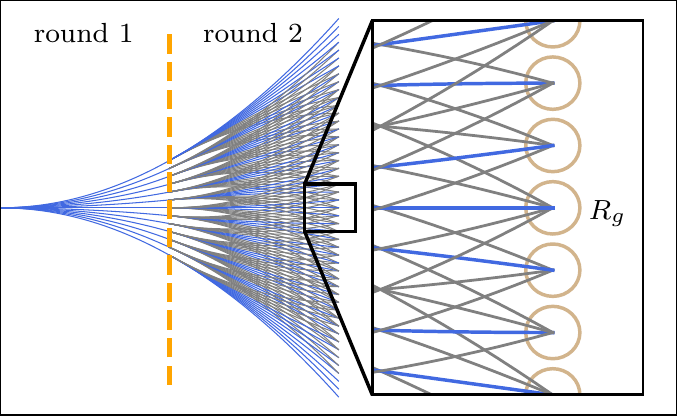}
\label{fig:local_trunc}}
\hfil
\subfloat[Overtaking scenario]{\includegraphics[width=2.15in]{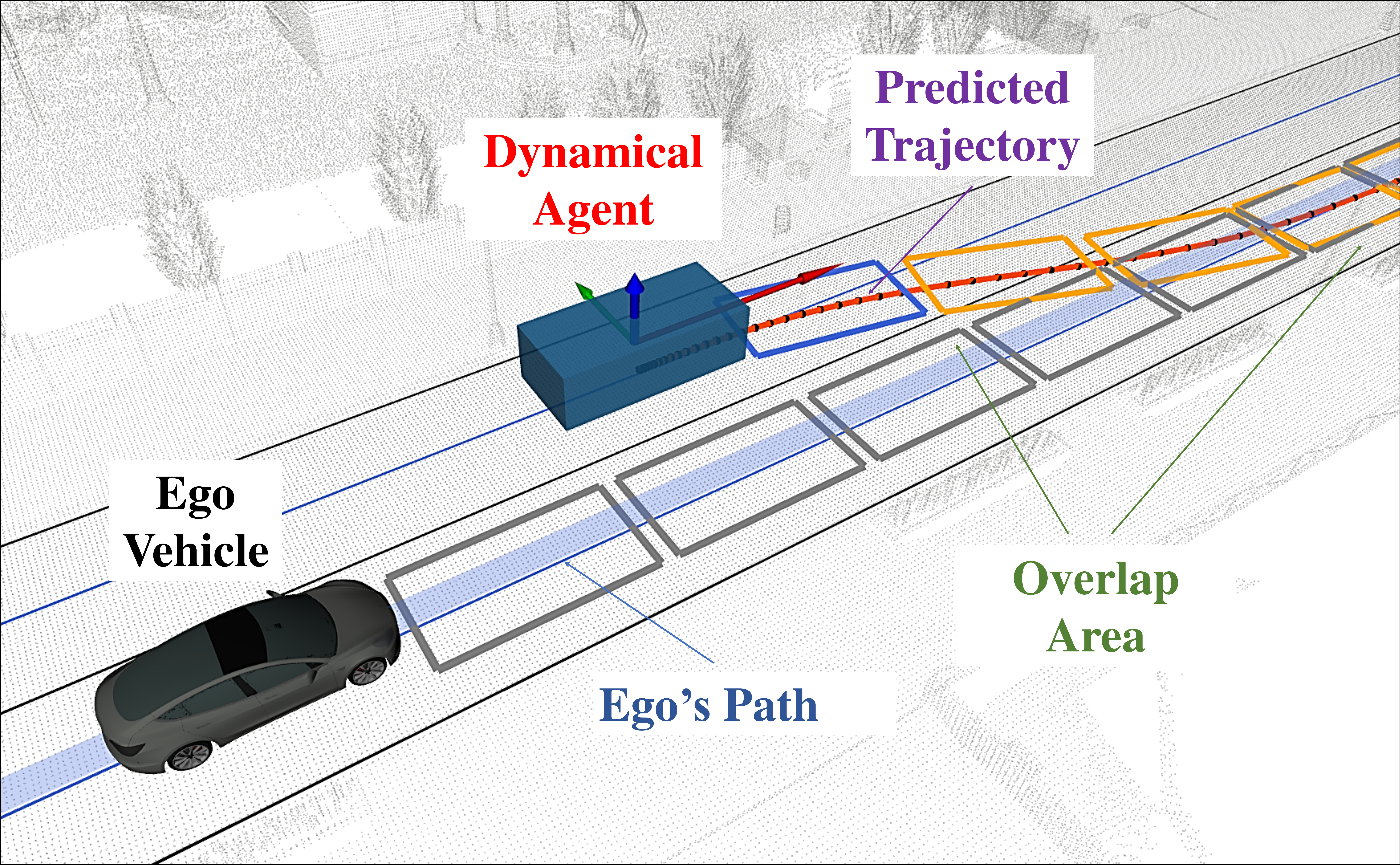}
\label{fig:overtake}}
\hfil
\subfloat[s-t graph of the overtaking scenario]{\includegraphics[width=2.15in]{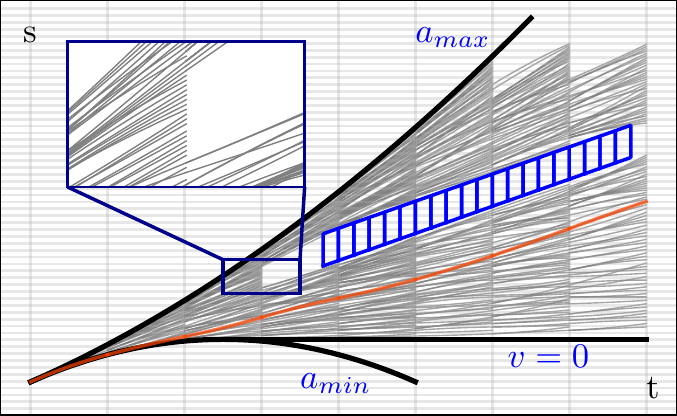}
\label{fig:overtake_st}}
\caption{Illustration of the efficient s-t graph search algorithm. 
(a) Results of forward expansion for two rounds. Gray lines represent the discarded states during local truncation.
(b) An illustration scenario where an agent is trying to overtake the AV. 
(c) The corresponding s-t search results. The blue area in the figure represents the blocked region, the gray lines are the intermediate results of the s-t graph search, and the red line is the optimal speed-profile.
}
\label{fig:st_graph_algorithm}
\end{figure*}

In this section, we assume that the AV's acceleration limit are $a_{\min}$ and $a_{\max}$, the initial longitudinal state is $\bm{s}_0 = [s_0, \dot{s}_0, \ddot{s}_0]^\intercal$, predicted future trajectories of dynamical agents are available, and a desired path generated by Sect. \ref{sect:gaussian_process_path_planning} is already given.
With a little abuse of notation, $s$ in this section represents the arc-length along the desired path. 
  
\subsection{Efficient S-t Graph Search}
In speed planning, local minimums inevitably exist. For example, in a merging scenario, the AV can choose to either decelerate to yield or accelerate to overtake the human driver.
To escape from the local minimum and ensure safety, a searching, or sampling step is necessary.
As outlined in Algorithm \ref{alg:efficient_search}, our proposed s-t graph search algorithm consists of three main steps: trajectory projection, forward expansion, and local truncation.

1) \textit{Trajectory Projection}.
A dynamical agent is critical if its future trajectory has possible intersections with the AV's desired path.
For each critical agent, a sequence of trajectory points are sampled within time interval $T_s$.
Then we check at every trajectory point whether the bounding box of the agent at that position has an overlap with the desired path.
If so, a blocked region is marked in the s-t graph, by finding its projection w.r.t. the desired path.

2) \textit{Forward Expansion}.
Starting with the initial state $\bm{s}_0$, we perform forward simulation with a set of discretized control inputs $\mathcal{A} = \{ a_{\min}\leq a_1, \dots, a_k \leq a_{\max}\}$ for time interval $T_f$ using the constant acceleration model:
\begin{equation}
\begin{bmatrix} s_{i + 1}^j \\ \dot{s}_{i + 1}^j  \end{bmatrix} = 
\begin{bmatrix} s_i + \dot{s}_i T_f + \frac{1}{2} a_j T_f^2 \\ \dot{s}_i + a_j T_f  \end{bmatrix}
, 1 \leq j \leq k.
\end{equation}
During the expansion, any child state, which falls into the blocked regions, will be abandoned.
After expansion, each survived child state will repeat that process until reaching the maximum planning duration or path length. 
Each state has a corresponding cost, which is defined as
\begin{equation}
J = J_p +  w_u J_u + w_r J_r + w_b J_b,
\end{equation}
where $J_p$ is the cost of the parent state, $J_u = \int_0^{T_f} a^2 dt$ is the control cost, $J_r = \abs{\dot{s} - \dot{s}_{\text{ref}}}$ is the deviation from the reference velocity, $J_b$ is the cost of being close to blocked regions, and $w_{(\cdot)}$ are weights.

3) \textit{Local Truncation}.
As shown in Fig. (\ref{fig:local_trunc}), the number of states will grow exponentially during the expansion, which would be problematic for real-time implementation.
To reduce the computation costs, we propose a local truncation mechanism.
In the $j$-th round of expansion, the longitudinal position of an arbitrary child state is
\begin{equation}
s_j = s_0 + j \dot{s}_0 T_f + \frac{1}{2} \underbrace{\left[(2j - 1) a_{j}^0 +\dots + a_{j}^j \right]}_{\overline{a}} T_f^2,
\label{eq:child_node_location}
\end{equation} 
where $a_j^i|_{0\leq i \leq j} \in \mathcal{A}$ is the control input in the $i$-th round.
(\ref{eq:child_node_location}) indicates that child states with similar $\overline{a}$ are near to each other in the s-t graph, thus contributing little to the exploration of the free space.
Based on this observation, in each round, the child states within radius $R_g$ will be clustered into groups.
In each group, only the state with the minimum cost will be used for the next round.
This can be viewed as the balance between exploration and exploitation.
As Fig. (\ref{fig:local_trunc}) depicted, the number of states is largely reduced after the local truncation. 

Fig. (\ref{fig:overtake}) shows a scenario where an agent is trying to overtake the AV, and Fig. (\ref{fig:overtake_st}) shows the corresponding s-t search results.
The search result indicates that the AV decides to yield to the human driver.

\begin{figure}[!t]
\vspace{-0.7em}
\begin{algorithm}[H]
\caption{Efficient s-t graph search}\label{alg:efficient_search}
\begin{algorithmic}[1]
  \State \textbf{Notation}: a given path $\mathcal{P}$, predicted trajectories $\bm{\Gamma}$, s-t graph $\mathcal{G}$, states $\mathcal{V}$, forward steps $N_f$
  \State \textbf{Initialize}: $\mathcal{G} \gets \emptyset$, $\mathcal{V} \gets \emptyset$
  \State $\mathcal{G} \gets$ \textsc{ProjectTrajectory}($\mathcal{P}$, $\bm{\Gamma}$), $\mathcal{V}_0 \gets$ \textsc{Node}($\bm{s}_0$)
  \For{$i \in \{1, \dots, N_{f}\}$}
        \For{$v \in \mathcal{V}_{i-1}$}
        \State $\overline{\mathcal{V}} \gets \emptyset$ 
        \For{$a \in \{a_1,\dots,a_k\}$}
                \State $\overline{v} \gets$ \textsc{ForwardExpansion}($v,a, T_f$)
                \If{\textsc{NotBlocked}($\mathcal{G}, \overline{v}$)}
                    \State \textsc{CalculateCost}($\mathcal{G}, \overline{v}$)
                    \State $\overline{\mathcal{V}} \gets \overline{v}$
                \EndIf
        \EndFor
        \EndFor
        \State $\mathcal{V}_i \gets$ \textsc{LocalTruncation}($\overline{\mathcal{V}}, R_g$)
  \EndFor
\end{algorithmic}
\end{algorithm}
\vspace{-2em}
\end{figure}

\subsection{S-t Curve Smoothing\label{sec:smoothing}}
Although we efficiently find a speed-profile through Algorithm \ref{alg:efficient_search}, we implicitly assume that the acceleration changes instantaneously during the forward expansion.
To improve the driving comfort, a smoothing step is desired.
We fit the coarse speed-profile with a piecewise B\'ezier curve and construct the overall problem as a quadratic programming (QP).
This approach has been extensively studied, 

and we refer interested readers to \cite{ding2019ssc_planner} for details.

Particular attention should be paid to the speed constraint. The maximum speed is determined by two elements.
One is the speed limit of the road $v_{\text{road}}$. The other is derived from the lateral acceleration constraint:
\begin{equation}
        v(t) \leq v_\kappa^{\max}(t) = \sqrt{a_{\text{lat}}^{\max} / \kappa(s(t))},\label{eq:l_acc_constraint}
\end{equation}
where $a_{\text{lat}}^{\max}$ is the maximum lateral acceleration, and $\kappa(s(t))$ is the curvature of the path.
However, since we do not know $s(t)$ in advance, we can only estimate $v_{\kappa}^{\max}(t)$ using the initial speed-profile.
This approximation is sometimes problematic, and we will discuss it further in Sect. \ref{sec:refinement}. 

\section{Incremental Refinement\label{sec:refinement}}

As aforementioned in Sect. \ref{sec:smoothing}, the final trajectory may violate the lateral acceleration constraint if we overestimate (\ref{eq:l_acc_constraint}).
Existing solutions mostly adopt an iterative optimization approach, which is computationally expensive.
Actually, since violations are usually sparse and local, one would naturally ask whether it is necessary to resolve the whole problem in each iteration as the majority of the trajectory is valid and unchanged.
Motivated by \cite{mukadam2018continuous}, we propose to iteratively refine the trajectory in an incremental manner.

\subsection{Factor Graph and Bayes Tree}
 A \textit{factor graph} \cite{factorgraph} is a bipartite graph that represents the factorization of a function.
It is known that performing inference on a factor graph is equivalent to solving the MAP problem (\ref{eq:map}).
Fig. \ref{fig:factor_graph}a shows an example factor graph of the MAP problem (\ref{eq:map}), where factors represent priors and likelihood functions.
Through variable elimination, a factor graph is converted to a \textit{Bayes tree} \cite{kaess2010bayes}, which is an efficient tool for incremental inference.
Fig. \ref{fig:factor_graph}b shows the corresponding Bayes tree of Fig. \ref{fig:factor_graph}a with an elimination order from the first state to the last state.
One important property is that changes in the factor graph only affect parts of the Bayes tree. See \cite{kaess2012isam2} for more details.

\subsection{Incremental Trajectory Refinement}
As outlined in Algorithm \ref{alg:refinement}, in the initialization step, the path and speed profile are generated using the methods described in Sect. \ref{sect:gaussian_process_path_planning} and Sect. \ref{sect:speed_planning} respectively.
In each iteration, $n$ trajectory points are sampled at time stamp $\{t_0,\dots,t_n\}$.
If the lateral acceleration of the trajectory point $i$ exceeds the allowable value $a_{\text{lat}}^{\max}$, a new factor is added to the factor graph, which constrains the maximum lateral acceleration of the path at $s(t_i)$.
The lateral acceleration is estimated using the speed profile of the last iteration with
\begin{equation}
        a_{\text{lat}}\left( s(t_i) \right) = d''(s(t_i))\dot{s}(t_i)^2 + d'(s(t_i))\ddot{s}(t_i),
\end{equation}
where $\bm{d}(s(t_i))=[d(s(t_i)), d'(s(t_i)), d''(s(t_i))]$ is interpolated using (\ref{eq:interpolate}) with adjacent states. 

When the Bayes tree updates, only the parts of path associated with the newly added factors is updated.
Thereafter, we regenerate the speed profile for the new path. This process is repeated until all constraints are satisfied or the maximum number of iterations is reached.

\begin{figure}
\centering
\includegraphics[width=3.0in]{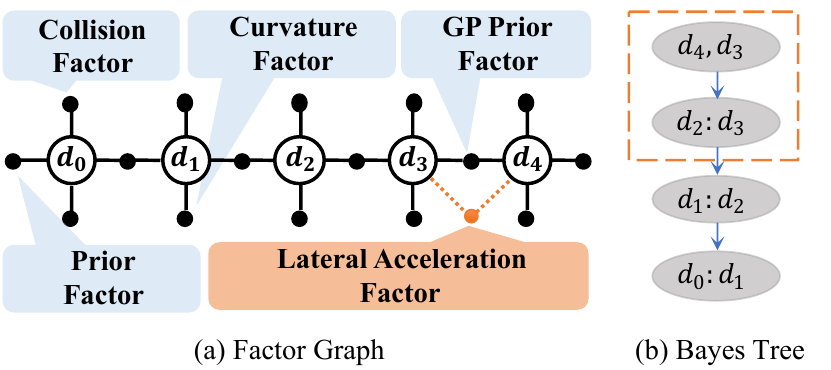}
\caption{An example factor graph and Bayes tree.
When a lateral acceleration factor is added to the graph, only part of the Bayes tree (dashed orange box in (b)) will be affected.}
\label{fig:factor_graph}
\end{figure}

\begin{figure}
\begin{algorithm}[H]
\caption{Incremental trajectory refinement}\label{alg:refinement}
\begin{algorithmic}[1]
  \State \textbf{Initialize}: $\mathcal{P}_0 \gets \textsc{GPPathPlanning}(\bm{s}, \mathcal{O})$\\ $\qquad \quad \quad\; \mathcal{S}_0 \gets \textsc{SpeedProfileGeneration}(\mathcal{P}_0,\Gamma)$ 
  \State \textit{BayesTree.Init}$(\mathcal{P}_0)$
  \For{k in $\{1,\dots,\text{max\_iter}\}$}
  \State $f_{\text{add}} \gets \emptyset$
        \For{$t_i$ in $\{t_0,\dots,t_n\}$}
        \State $a_{\text{lat}}(t_i) \gets \textsc{LateralAcc}(\mathcal{P}_{k-1}, \mathcal{S}_{k-1}, t_i)$
        \If{$a_{\text{lat}}(t_i) > a_{\text{lat}}^{\max}$}
             \State $f_{\text{add}} \gets \textsc{NewFactor}\left(\bm{d}(s(t_i)), \mathcal{S}_{k-1}, t_i\right)$
        \EndIf
        \EndFor
        \If{$f_{\text{add}} == \emptyset$}
                \textbf{break}
        \EndIf
        \State $\mathcal{P}_k \gets \textit{BayesTree.Update}(f_{\text{add}})$
        \State $\mathcal{S}_k \gets \textsc{UpdateSpeedProfile}(\mathcal{P}_k)$
  \EndFor
\end{algorithmic}
\end{algorithm}
\vspace{-1em}
\end{figure}
\section{Experimental Results}

\def\CC{{C\nolinebreak[4]\hspace{-.05em}\raisebox{.4ex}{\tiny\bf ++}}}

\subsection{Implementation Details}
We initialize the GP path planner by solving a simplified problem where constraints are ignored.
This step is necessary for almost all path planners, since we have to decide on which side (left or right) to pass by the obstacles.
The total length of the planning path is $100\,\mathrm{m}$ and we uniformly sample $N=20$ longitudinal locations along the reference line.
Between $\bm{d}_i$ and $\bm{d}_{i+1}$, we add 10 more interpolated lateral states using (\ref{eq:interpolate}) for collision and curvature constraints.
The ESDF ($8\,\mathrm{m} \times 100\,\mathrm{m}$) is built with a resolution $d_\Delta = s_\Delta = 0.1\,\mathrm{m}$ in each planning cycle.
For speed planning, the control limits of the AV are set to $a_{\max}=2\,\mathrm{m/s}$ and $a_{\min} = -4\,\mathrm{m/s}$.
We use $k=13$, and $T_f = 1\,\mathrm{s}$ in s-t the graph search.
The proposed method\footnote{Source code will be released at \href{https://github.com/jchengai/gpir}{\color{blue}https://github.com/jchengai/gpir}} is implemented in C\texttt{++}, and we use GTSAM \cite{dellaert2012factor} to implement Algorithm \ref{alg:refinement}.
All the experiments are conducted on a desktop computer with an Intel I7-8700K CPU (3.7 GHz).

\subsection{Qualitative Results}

\begin{figure}
   \centering
   \vspace{0.5em}
   \subfloat[t=11.60s, adding obstacle to lane]{\includegraphics[width=3.2in]{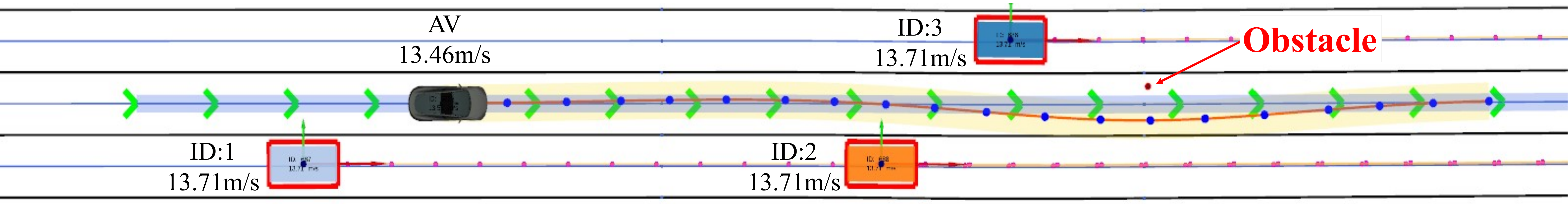}
      \label{fig:oads1}}\\
   \vspace{-0.5em}
   \subfloat[t=13.86s, decelerating to yield]{\includegraphics[width=3.2in]{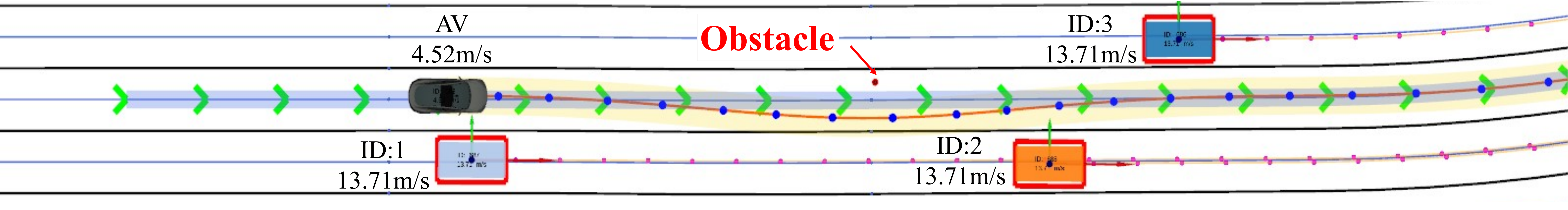}
      \label{fig:oads2}}\\
   \vspace{-0.5em}
   \subfloat[t=17.04s, accelerating to pass]{\includegraphics[width=3.2in]{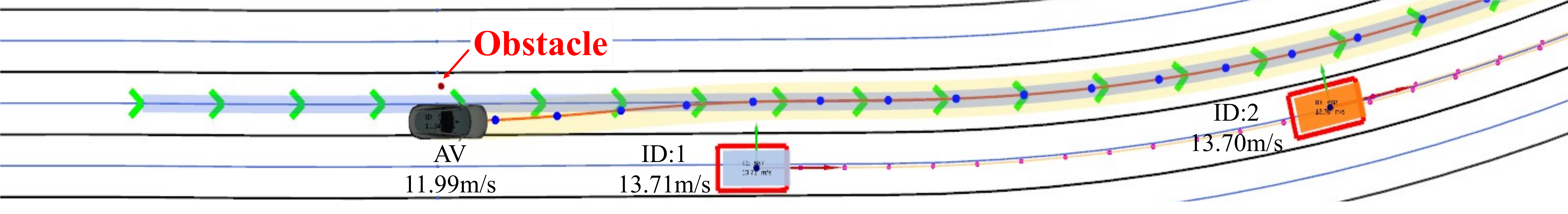}
      \label{fig:oads3}}\\
   \vspace{-0.5em}
   \subfloat[t=11.60s]{\includegraphics{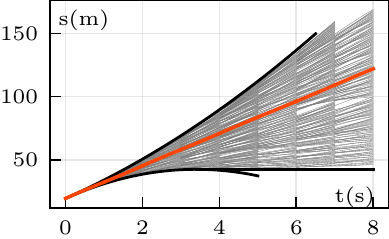}}
   \hfil
   \subfloat[t=13.86s]{\includegraphics{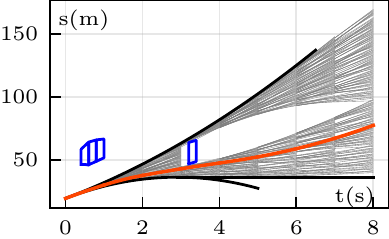}}
   \caption{Illustration of obstacle avoidance in dynamic environments. (d) and (e) show the corresponding s-t graphs of (a) and (b).}
   \label{fig:oads}
\end{figure}

We validate our method in the CARLA \cite{carla} simulation environment.
The predicted trajectories are generated by the constant velocity model for simplicity, but our method does not depend on any particular prediction module.

Fig. \ref{fig:oads} depicts a scenario where the AV has to avoid an obstacle in front of it while there are other dynamical agents around.
The AV decelerates and yields to other agents to avoid a potential risky interaction near the obstacle.
This case shows our proposed path planner's capability of obstacle avoidance and effectiveness of the s-t graph search algorithm.
More illustrative scenarios, including consecutive obstacle avoidance, navigating in dense traffic, lane changing at high speed and {\color{\mycolor}sharp turning manuever} can be found in the supplementary video at \href{https://youtu.be/NHEZDrAzghI}{\color{blue}{https://youtu.be/NHEZDrAzghI}}.

\begin{table}[t]
   \begin{threeparttable}
      \caption{Time Consumption of Different Planning Steps in One Cycle}
      \centering
      \label{tab:perfomance}
      \setlength\tabcolsep{3pt} 
      \begin{tabularx}{\columnwidth}{@{}lCC cc cc@{}}
         \toprule
                  & \multicolumn{4}{c}{Planning Steps}           &      &                                                                                      \\
         \cmidrule(lr){2-5}
         Time
                  & \multicolumn{1}{C}{GP Path Planning}
                  & \multicolumn{1}{C}{Speed Profile Generation}
                  & \multicolumn{2}{C}{Iterative Refinement}
                  & \multicolumn{2}{c}{Total}                                                                                                                  \\
         \midrule
         Min(ms)  & 2.36                                         & 1.10 & \textbf{2.03}\tnote{a}  & 5.56\tnote{b}  & \textbf{6.06}\tnote{a}  & 10.09\tnote{b}  \\
         Avg.(ms) & 6.09                                         & 3.77 & \textbf{6.41}\tnote{a}  & 16.47\tnote{b} & \textbf{17.22}\tnote{a} & 27.28\tnote{b}  \\
         Max.(ms)  & 22.78                                        & 6.99 & \textbf{29.96}\tnote{a} & 71.89\tnote{b} & \textbf{57.73}\tnote{a} & 101.66\tnote{b} \\
         \bottomrule
      \end{tabularx}

      \smallskip
      \scriptsize
      \begin{tablenotes}
         \RaggedRight
         \item[a] Incremental version
         \item[b] Non-incremental version
      \end{tablenotes}
   \end{threeparttable}
\end{table}

\begin{table}[h!]
   \begin{threeparttable}
      \caption{\color{\mycolor}Results For Random Planning Tasks}
      \centering
      \label{tab:my-table}
      \setlength\tabcolsep{5pt} 
      \begin{tabularx}{\columnwidth}{@{}cccc@{}}
         \toprule
                                    & G3P-Cur (ours)       & DL-IAPS\cite{zhou2021dl_iaps} & TDR-OBCA\cite{he2021tdr} \\
         \midrule
         Success\tnote{a} Rate (\%) & \textbf{98.90} & 55.40   & 97.40     \\
         Avg. Time (ms)             & \textbf{43.78}\tnote{b} & 154.15  & 1466.32   \\
         Max. Time (ms)             & \textbf{55.16} & 179.70  & 2288.5   \\
         \bottomrule
      \end{tabularx}

      \smallskip
      \scriptsize
      \begin{tablenotes}
         \RaggedRight
         \item[a] means collision-free and the violation of the curvature constraint is within 5\%.\\
         \item[b] Avg. time is larger than Table I since tasks here are much more challenging.
      \end{tablenotes}
   \end{threeparttable}
   \vspace{-1em}
\end{table}

\subsection{Quantitative Results}

1) \textit{Real-time Performance}:
To show the computational efficiency of our proposed method, we let the AV randomly navigate in Town03 of CARLA with dynamical agents and record the computation time for 20 minutes.
Static obstacles are randomly added to the AV's driving lane every $10\,\mathrm{s}$.
We also implement a non-incremental version of Algorithm \ref{alg:refinement} for comparison.
As shown in Table \ref{tab:perfomance}, our method runs above $20\,\mathrm{Hz}$ for most of the time.
The maximum value only occurs in some extreme circumstances, for example, avoiding an obstacle in a sharp turn.
The incremental version is nearly 2.5 times faster than the non-incremental version in the iterative refinement step.

2) \textit{Curvature Constraint}: To examine our proposed curvature constraint (\ref{eq:kappa_limit}), {\color{\mycolor}we randomly generate 1000 challenging planning tasks.} 
We implement two versions of the GP path planner -- (a) G3P: without curvature constraints, and (b) G3P-Cur: with curvature constraints.
Two open-source version of the SOTA planners are used for comparison: (c) DL-IAPS \cite{zhou2021dl_iaps}, an approach based on iterative path-speed refinement; {\color{\mycolor}(d) TDR-OBCA \cite{he2021tdr}, an approach based on optimal control.}
We feed DL-IAPS and TDR-OBCA with the result of G3P since they require a collision-free initial path as the warm start.
{\color{\mycolor}We adopt the curvature limit $0.2~\mathrm{m^{-1}}$ used in DL-IAPS.
The benchmark results are summarized in Table \ref{tab:my-table} and Fig. \ref{fig:curvature_constraint} shows an example.
From the results, we see that our method outperforms other SOTA planners both in success rate and efficiency.}

\begin{figure}
   \centering
   \vspace{0.5em}
   \subfloat[Curvature constraint]{\includegraphics[width=2.9in]{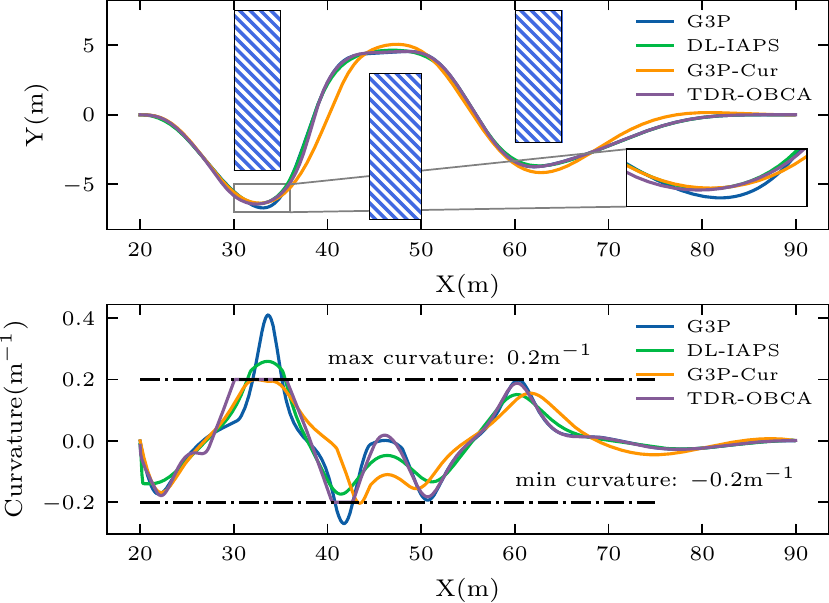}
      \label{fig:curvature_constraint}}\\
   \subfloat[Lateral acceleration constraint]{\includegraphics[width=2.9in]{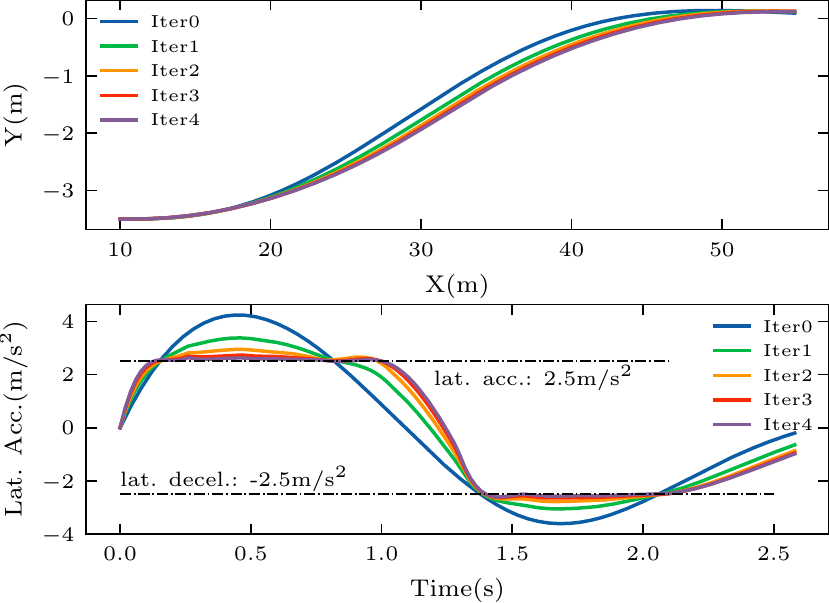}
      \label{fig:lat_acc_constraint}}
   \caption{
      (a) The AV has to avoid three random placed obstacles consecutively in a narrow space and should not violate the curvature constraint.
      (b) Resulting path and lateral acceleration in a lane changing scenario at a speed of 17.5 m/$\mathrm{s}^2$ after four iterations.}
   \label{fig:kinodynamic_constraint}
   \vspace{-1em}
\end{figure}

3) \textit{Lateral Acceleration Constraint}: Fig. \ref{fig:lat_acc_constraint} shows the results of Algorithm \ref{alg:refinement} in a lane change scenario at a speed of 17.5 $\mathrm{m/s^2}$ with a $3.5\,\mathrm{m}$ lane width.
The maximum lateral acceleration and deceleration are $2.5\,\mathrm{m/s^2}$ and $-2.5\,\mathrm{m/s^2}$ respectively.
The trajectory of iteration$\,0$ is not feasible, and it can be viewed equivalently to the result of the seminal work \cite{werling2012optimal}, as mentioned in \ref{sect:full_posterori}.
After four iterations, the lateral acceleration is limited to the acceptable range.

\section{CONCLUSIONS}

In this paper, we presented a real-time, kinodynamic trajectory planning framework in dynamic environments. 
We introduced a GP path planner that generates a collision-free path under curvature constraints, and outperforms the SOTA method.
An efficient s-t graph search method with a local truncation mechanism was also proposed to enable fast speed-profile generation.
Kinodynamic feasibility is guaranteed by the novel incremental refinement scheme.
Our method was examined with various numerical experiments.

The main limitation of this work is that the uncertainties are not well modeled (\textit{e.g.,} prediction uncertainty).
However, we believe that planning as probabilistic inference is a promising framework that can systematically deal with uncertainties, and this remains for future work.
\appendix
\subsection{White-noise on jerk model\label{wnoj}}
For (\ref{ltv-sde}) with WNOJM, we have $\bm{u}(s) = \mathbf{0}$ and 
\begin{align}
\mathbf{A}(s) \doteq \mathbf{A} = \begin{bmatrix} \mathbf{0} & \mathbf{I} & \mathbf{0} \\ \mathbf{0} & \mathbf{0} & \mathbf{I} \\ \mathbf{0} & \mathbf{0} & \mathbf{0} \end{bmatrix},
\mathbf{F}(s) \doteq \mathbf{F} = \begin{bmatrix} \mathbf{0} \\ \mathbf{0} \\ \mathbf{I} \end{bmatrix}.\label{eq:wnoj}
\end{align}
Let $\Delta s = s_{i} - s_{i-1}$, and the state transition matrix is computed by
\begin{equation}
\bm{\Phi}(s_{i}, s_{i - 1}) = \exp(\mathbf{A} \Delta s) = 
\begin{bmatrix}
\mathbf{I} & \Delta s\mathbf{I} & \frac{1}{2} \Delta s^2 \mathbf{I} \\
\mathbf{0}& \mathbf{I} & \Delta s \mathbf{I} \\
\mathbf{0}& \mathbf{0} & \mathbf{I}
\end{bmatrix}.\label{eq:phi}
\end{equation}
From (\ref{eq:Q_i}), we have
\begin{equation}
\begin{aligned}
\mathbf{Q}_i 
&= 
\begin{bmatrix}
\frac{1}{20} \Delta s^5 & \frac{1}{8} \Delta s^4 & \frac{1}{6}\Delta s^3\\
\frac{1}{8}\Delta s^4 & \frac{1}{3} \Delta s^3 & \frac{1}{2} \Delta s^2 \\
\frac{1}{6} \Delta s^3\, & \frac{1}{2} \Delta s^2 & \Delta s
\end{bmatrix} \otimes \mathbf{Q}_c \label{eq:Q}.
\end{aligned}
\end{equation}

\subsection{Proof of Theorem \ref{theo:jerk_optimal}\label{proof}}
\begin{proof}
As already proved by \cite{werling2012optimal}, we know that a quintic polynomial is the jerk optimal connection between two states $P_0 = [p_0,\dot{p}_0,\ddot{p}_0]$ and $P_1 = [p_1,\dot{p}_1,\ddot{p}_1]$.
Then we only have to prove that the result path given by (\ref{eq:interpolate}) and (\ref{eq:posterior}) does connect $\bar{\bm{d}}_0$ and $\bar{\bm{d}}_N$, and it is indeed a quintic polynomial.
By rearranging the posterior mean of (\ref{eq:posterior}), we have
\begin{equation}
\left( \bm{\mathcal{K}}^{ - 1} + \mathbf{M}^\top \mathbf{R}^{ - 1}\mathbf{M} \right)\bm{d}^* = \bm{\mathcal{K}}^{ - 1} \bm{\mu} + \mathbf{M}^\top \mathbf{R}^{ - 1} \bm{o},\label{eq:rerrange}
\end{equation}
Since $\bm{s}=\{s_0, s_N\}$, $\mathbf{M}$ becomes the identity matrix and (\ref{eq:rerrange}) simplifies to
\begin{equation}
\begin{aligned}
\left(\mathbf{R}\bm{\mathcal{K}}^{ - 1} + \mathbf{I} \right) \bm{d}^* &= \mathbf{R}\bm{\mathcal{K}}^{ - 1}\bm{\mu} + \bm{o}.
\end{aligned}
\end{equation}
Using $\mathbf{R}\to \bm{0}$, we have $\mathbf{R}\bm{\mathcal{K}}^{-1}\to \bm{0}$, which gives
\begin{equation}
\bm{d}^* = \bm{o} = \begin{bmatrix} \bm{\bar{d}}_0 & \bm{\bar{d}}_N \end{bmatrix}^\intercal.
\end{equation}
Next, we will prove that (\ref{eq:interpolate}) is quintic Hermite interpolation.
By using the property of the state transition matrix $\bm{\Psi}(\tilde{s}, s_0) = \bm{\Psi}(\tilde{s}, s_i)\bm{\Psi}(s_i, s_0)$ and (\ref{eq:wnoj}), (\ref{eq:interpolate}) simplifies to
\begin{equation}
\bm{d}(\tilde{s}) = \bm{\Lambda}(\tilde{s}) \bm{d}_i + \bm{\Psi}(\tilde{s}) \bm{d}_{i + 1}.\label{eq:hermite}
\end{equation}
Expanding it with (\ref{eq:phi}) and (\ref{eq:Q}), it is easy to verify that (\ref{eq:hermite}) is precisely the quintic Hermite interpolation; \textit{i.e.,} the result path is a quintic polynomial.
\end{proof}


\bibliographystyle{IEEEtran}
\bibliography{IEEEabrv,ref}

\end{document}